

\documentclass[10pt]{article}
\usepackage[margin=0.75in]{geometry}

\usepackage[pdftex]{graphicx}
\graphicspath{{../pdf/}{../jpeg/}}
\DeclareGraphicsExtensions{.pdf,.jpeg,.png}
\usepackage{amssymb,amsmath,url}
\usepackage{bm} 
\usepackage{multirow}
\usepackage{booktabs}	
\usepackage{color}
\usepackage{subfigure}
\usepackage{bigfoot}		
\usepackage{algpseudocode}
\usepackage{pifont}
\usepackage{tikz}
\usepackage{color}
\usepackage{cite}
\usepackage{setspace}
\usepackage{lipsum}
\usepackage{algorithm} 
\usepackage{algpseudocode}
\usepackage[pdftex]{graphicx}
\usepackage{epsfig}
\usepackage{comment}
\usepackage{enumitem} 
\usepackage{amssymb}
\usepackage{hyperref}
\usepackage{fontawesome5}

\algrenewcommand\algorithmicrequire{\textbf{Input:}}
\algrenewcommand\algorithmicensure{\textbf{Output:}}
\algrenewcommand\algorithmicforall{\textbf{For}}

\usepackage{amsthm} 
  
\interfootnotelinepenalty=10000

\newtheorem{proposition}{Proposition}

\allowdisplaybreaks

\newtheorem{remark}{Remark}

\newtheorem{assumption}{Assumption}


\newcommand{\E}{\mathbb{E}}

\DeclareMathOperator*{\argmin}{arg\,min}
\DeclareMathOperator*{\argmax}{arg\,max}

\newcommand{\be}{\begin{equation}}
\newcommand{\ee}{\end{equation}}
\newcommand{\bee}{\begin{eqnarray}}
\newcommand{\eee}{\end{eqnarray}}

\allowdisplaybreaks

\newcommand{\bse}{\begin{subequations}}
	\newcommand{\ese}{\end{subequations}}

\renewcommand{\Pr}{\mathop{\mathbb{P}}}

\newcommand{\defeq}{\overset{\text{\tiny def}}{=}}

\ifodd 0
    \newcommand\rev[1]{{\color{magenta}#1}}
    \newcommand{\com}[1]{\textbf{\color{red} (COMMENT: #1)}} 
\else
    \newcommand\rev[1]{{#1}}
    \newcommand{\com}[1]{}
\fi

\ifodd 0 

\else
    
    \newcommand{\accom}[1]{}
\fi

\ifodd 0 
\newcommand{\cnjCom}[1]{{\textbf{\color{red} (CNJ$\to$ #1)}}}
\else
    
    \newcommand{\cnjCom}[1]{}
\fi

\makeatletter

\makeatother
\title{\LARGE \bf 
VABO: Violation-Aware Bayesian Optimization for Closed-Loop Control Performance Optimization with Unmodeled Constraints	
}
\author{Wenjie Xu\thanks{{Laboratoire d’Automatique}, École polytechnique fédérale de Lausanne, Lausanne, Switzerland.~{\faIcon{envelope}~\texttt{\{wenjie.xu, colin.jones\}@epfl.ch}}.} \footnotemark[2],   
Colin N Jones\footnotemark[1],  Bratislav Svetozarevic\thanks{Swiss Federal Laboratories for Materials Science and Technology, Switzerland.~\faIcon{envelope}~\texttt{bratislav.svetozarevic@empa.ch}},\\  Christopher R. Laughman\thanks{Mitsubishi Electric Research Laboratories, Cambridge, MA, USA.~\faIcon{envelope}~\texttt{laughman@merl.com}}, 
Ankush Chakrabarty\footnotemark[3] \footnotemark[4]\thanks{Corresponding author. \faIcon{envelope}~\texttt{achakrabarty@ieee.org}. \faIcon{phone-square-alt}~+1 (617) 758-6175. \faIcon{map-marker-alt} 201 Broadway, 8th Floor, Cambridge, MA 02139, USA.}
\thanks{This work has received support from the Swiss National Science Foundation under the NCCR Automation project, grant agreement 51NF40\_180545.}}
\allowdisplaybreaks
\begin{document}
\maketitle 
\begin{abstract}\noindent We study the problem of  performance optimization of closed-loop control systems with unmodeled dynamics.  Bayesian optimization (BO) has been demonstrated effective for improving closed-loop performance by automatically tuning controller gains or reference setpoints in a model-free manner. However, BO methods have rarely been tested on dynamical systems with unmodeled constraints. In this paper, we propose a violation-aware BO algorithm (VABO) that optimizes closed-loop performance while simultaneously learning constraint-feasible solutions. Unlike classical constrained BO methods which allow an unlimited constraint violations, or `safe' BO algorithms that are conservative and try to operate with near-zero violations, we allow budgeted constraint violations to improve constraint learning and accelerate optimization. We demonstrate the effectiveness of our proposed VABO method for energy minimization of industrial vapor compression systems. 
\end{abstract}
\vspace{.1cm}
\section{INTRODUCTION}
\vspace{.1cm}
Closed-loop systems can often be optimized after deployment by altering controller gains or reference inputs guided by the performance observed through operational data. Manually tuning these control parameters often requires care and effort along with considerable task-specific expertise. Algorithms that can automatically adjust these control parameters to achieve optimal performance are therefore invaluable for saving manual effort, time, and expenditure. 

Optimal performance of a control system is generally defined via domain-specific performance functions whose arguments are outputs that can be measured from the closed-loop system. While the map from measurements to performance may be clearly defined, the map from control parameters (that can actually be tuned) to performance is often unmodeled or unknown, since closed-form system dynamics may not be available during tuning~\cite{chakrabarty2021_VCS}. It is thus common to treat the control parameters-to-performance map as a black-box, and design the tuning algorithm to perform in a data-driven manner, where data is collected by performing experiments or simulating a model. However, since both experimentation and high-fidelity software simulations are expensive, tuning algorithms must be designed to assign a near-optimal set of control parameters with as few experiments/simulations (equivalently, performance function evaluations) as possible.

It is precisely for this reason that Bayesian optimization~(BO) has received widespread attention in the context of closed-loop performance optimization. BO is a sample-efficient derivative-free global optimization method~\cite{jones1998efficient, frazier2018tutorial} that utilizes probabilistic machine learning to intelligently search through high-dimensional parameter spaces.
In recent work, BO has demonstrated potential in controller gain tuning~\cite{lederer2020parameter, duivenvoorden2017constrained,khosravi2019controller, konig2020safety}, MPC tuning~\cite{bansal2017goal, piga2019performance, paulson2020data} and in various real-world control applications, such as wind energy systems~\cite{baheri2017altitude, baheri2020waypoint}, engines~\cite{pal2020multi} and space cooling~\cite{chakrabarty2021_VCS}.

A challenge that has garnered recent interest is that of \textit{safe} Bayesian optimization; that is, BO in the presence of safety-critical system constraints. These constraints may also be unmodeled (`black-box'), as a mathematical representation of the constraint with respect to the control parameters is not always known or straightforward to represent. To handle these constraints, a variety of so-called `safe BO' techniques have been recently proposed; these methods either operate on the principle of minimizing constraint violations during optimization~\cite{sui2015safe, sui2018stagewise, turchetta2020safe}, or leverage partial model knowledge to ensure safety via Lyapunov arguments~\cite{chakrabarty2021safe}. In either case, safe BO achieves the goal of learning feasible optima without violating unmodeled constraints, or risking their violation with predefined small probability. Often, this conservativeness results in obtaining local minima, slow convergence speeds, and reduced data-efficiency.
Conversely, generic constrained optimization with BO learns constraints without paying heed to the amount of constraint violation during the exploration phase. These methods~\cite{gardner2014bayesian, gelbart2014bayesian}, are mostly agnostic to the deleterious consequences of constraint violation, such as long-term damage to expensive hardware caused by large violation, rendering them impractical for many industrial applications. A more violation-aware direction of BO research proposes the use of budgets on the cost of constraint violation~\cite{snoek2012practical, lee2020cost, lam2016bayesian, marco2020excursion}. However, the budget considered~(e.g., neural network training time~\cite{snoek2012practical}) is usually related to the effort or failure risk for performance function evaluation, and does not provide a way to manage the magnitude of constraint violations. 

For many industrial systems,  small constraint violations over a short period are acceptable if that exploration results in speedup of the optimization procedure, but large violations are heavily discouraged. Concretely, we study vapor compression systems (VCSs) where it is imperative that constraint violation on system variables such as compressor discharge temperature are limited to short time periods, and therefore, the interplay of informativeness and safety can be systematically adjusted through budgeting. In other words, for VCS, the benefits of accelerated global convergence significantly outweigh the cost of short-term constraint violations. 

In this paper, we propose a novel violation-aware Bayesian optimization~(VABO) that exhibits accelerated convergence compared to safe BO algorithms, while ensuring the violation cost is within a prescribed budget. We demonstrate that our proposed VABO algorithm is less conservative than `safe BO' algorithms that tend to have sample-inefficient exploration phases and can get stuck in a local minimum because they cannot allow any constraint violation. The VABO algorithm is also more cautious than constrained Bayesian optimization, which is agnostic to constraint violations and thus, is likely to incur significant violation costs. Our VABO algorithm is based on the principle of encouraging performance function evaluation at combinations of control parameters that greatly assist the optimization process, as long as it does not incur high constraint violations likely to result in system failure or irreversible damage.    

Our \textbf{contributions} include:

\begin{enumerate}
\item we propose a variant of  constrained BO methods for control parameter tuning that improves global convergence rates within a prescribed amount of constraint violation with guaranteed high probability;

\item we propose a simple and tractable constrained auxiliary acquisition function optimization problem for trading off performance improvement and constraint violation; and,

\item we validate our algorithms on a set-point optimization problem using a high-fidelity VCS that has been calibrated on an industrial HVAC system.
\end{enumerate}

\section{Preliminaries}
\subsection{Problem Statement}
We consider closed-loop systems of the form
\bee\label{eq:cl-sys}
\xi_+ = F(\xi, \theta),
\eee
where $\xi,\xi_+\in\mathbb R^{n_\xi}$ denote the system state and update (respectively), $\theta\in\Theta\subset \mathbb R^{n_\theta}$ the control parameters (e.g., set-points) to be tuned, and $F(\cdot,\cdot)$ the closed-loop dynamics with initial condition $\xi_0$. We assume that the closed-loop system~\eqref{eq:cl-sys} is designed such that it is exponentially stable to a control parameter dependent equilibrium state $\xi^\infty(\theta)$ for every $\theta\in\Theta$. We further assume that $\xi^\infty(\cdot)$ is a continuous map on $\Theta$. For simplicity, we also assume $\Theta$ is a box (product of intervals) in $\mathbb R^{n_\theta}$.

To determine the system performance, we define a continuous cost function $\ell(\theta):\mathbb{R}^{n_\theta}\mapsto\mathbb{R}$ to be minimized, which is an unknown/unmodeled function of the parameters $\theta$. This is not unusual: while $\ell$ may be well-defined in terms of system outputs, it is often the case that the map from control parameters to cost remains unmodeled; in fact, $\ell$ may not even admit a closed-form representation on $\Theta$, c.f.~\cite{chakrabarty2021_VCS,burns2018proportional}.

We also define $N$ unmodeled constraints on the system outputs that require caution during tuning. \rev{The $i$-th such constraint is given by $g_i(\theta):\mathbb{R}^{n_\theta}\mapsto\mathbb{R},i\in[N]$}, where the notation $[N]\defeq\{i\in\mathbb{N}, 1\leq i\leq N\}$; we assume each $g_i(\cdot)$ is continuous on $\Theta$. We assume that the cost function $l(\theta)$ and every constraint $g_i(\theta), \; i\in[N]$ can be ascertained, either by measurement or estimation, during the hardware/simulation experiment. We introduce the brief notation $g(\cdot)\le 0\triangleq g_i(\cdot)\le 0,\; i\in [N]$ and assume that an initial feasible set of solutions is available at design time.
\begin{assumption}\label{asmp:initial_set}
The designer has access to a non-empty safe set $\Theta_0$ such that for any $\theta\in\Theta_0$, no constraint is violated; that is, $g(\theta)\le 0$ for every $\theta\in\Theta_0$.
\end{assumption}
While such an initial set $\Theta_0$ can be derived using domain expertise, it is likely that $\Theta_0$ contains only a few feasible $\theta$, and at worst could even be a singleton set. 

We cast the control parameter tuning problem as a black-box constrained optimization problem, formally described by 
\bse\label{eqn:formulation}
\begin{align}
\min_{\theta\in\Theta}  \qquad& \ell(\theta),\label{eqn:obj} \\
\text{subject to:} \qquad & g_i(\theta)\leq 0,\quad \forall i\in[N].\label{eqn:constraint}
\end{align}
\ese
Our \textbf{objective} is to solve the constrained optimization problem~\eqref{eqn:formulation} with limited constraint violations \rev{during the optimization process}. 
Since the constraints are assumed to be unmodeled and a \rev{limited set} of feasible solutions is known at design time, we \rev{do not expect a guarantee of zero constraint violations. The tolerable amount and duration of constraint violations are problem-dependent.} \com{Bratislav \& Wenjie: Not actually, it depends on the algorithm; e.g. SafeBO will not violate; If you sample conservatively, the sampling efficiency is bad. If we strategically sample, as in VABO, we may have some limited, but controlled constraint violation. This limit we formally introduce as a \textit{budget for allowed constraint violations}.  Why not exploit constraint violation to obtain better solutions / exploring better solutions. Try to address the question of encountering the associated risks of allowing the algorithm to violate the constraints slightly -- within the budget. But this needs to be allowed by the domain expert.}\rev{In some applications, such as vapor compression systems, small constraint violations over a short-term are acceptable, while large constraint violations are strongly discouraged.} \rev{In such cases,} instead of being overly cautious and ending up with suboptimal solutions, we allow small constraint violations \textit{as long as the resulting knowledge gathered by evaluating an infeasible (in terms of constraint violation) $\theta$ accelerates the optimization process or helps avoid local minima}.
\cnjCom{This paragraph seems too strong. The formulation we take does not necessarily penalize large violation like this. We just assume that there is a user-provided function that measures how bad a violation is. It could well be the case that the engineer sets it up so that a large, but short, violation is considered better than a short, but long, violation (e.g., as is very common when tuning batch controller for process control, when off-spec means that product must be thrown away - so large and small violations are the same, only violation time matters)}\com{Wenjie: I think our problem formulation can take into account the user-specific tolerance-level to different patterns of constraint violations. I agree that our formulation does not necessarily requires penalizes large violations a lot. But I think in many application scenarios, it's true~(e.g., VCS). } \com{Bratislav: We can address those differences in large and short violations and small and long, but, not in this formulation. It would be possible, but we need to introduce time dependent violation, and this we shall do in the journal paper, and here say for future work.}

\begin{remark}
Our formulation~\eqref{eqn:formulation} can also optimize batch processes over finite time-horizons, say $T_h$. This would involve defining the objective and constraints over a batch trajectory with stage loss $\ell(\theta):=\tfrac{1}{T_h} \int_0^{T_h} l(\tau,\theta) \,\mathrm{d}\tau$.
\end{remark}

\subsection{Proposed Solution}
We propose a modified Bayesian optimization framework to solve the problem~\eqref{eqn:formulation} that is violation-aware: the algorithm automatically updates the degree of risk-taking in the current iteration based on the severity of constraint violations in prior iterations. Concretely, for an infeasible  $\theta$, the constraint violation cost is given by  
\bee
\bar c_i(\theta) \triangleq c_i\left([g_i(\theta)]^+\right)\label{equ:violation_cost}, \quad i\in [N]
\eee
where $[g_i(\cdot)]^+ := \max\{g_i(\cdot), 0\}$ and $c_i:\mathbb{R}_{\ge 0}\mapsto\mathbb{R}_{\ge 0}$. \rev{Note that $g_i$ corresponds to physically meaningful system outputs that we can measure, e.g., temperature.}
This violation cost function $c_i$ is user-defined as a means to explicitly weight the severity of `small' versus `large' constraint violations. While the function $c_i$ is at the discretion of the designer, it needs to satisfy the following mild assumptions in order to achieve desirable theoretical properties; see \S3.
\begin{assumption}
\label{assump:cost_func}
The violation function $c_i$ satisfies:
\begin{enumerate}[label=(A\arabic*)]
\item $c_i(0)=0$,\label{enu:zero_vio_zero_cost}
\item $c_i(s_1)\rev{\geq} c_i(s_2)$, if $s_1>s_2\geq 0$,\label{enu:mono}
\item $c_i$ is left continuous on $\mathbb{R}_{\ge0}$.
\end{enumerate}
\end{assumption}

Assumption~\ref{assump:cost_func} captures some intuitive properties required of the violation function. According to (A1), there is no cost associated with no violation. From (A2), we ensure that the violation cost is monotonically \rev{non-decreasing} with increased violations. Finally (A3) ensures that this monotonic increase is smooth and does not exhibit discontinuous jumps. 

\cnjCom{It's not clear that we need this function $c$, or that Assumption 2 does anything / is valid.\\[5pt]
The function $g$ is entirely user-defined. This means that it could be set to be $g_i(\theta) = c_i^{-1} \circ \hat g_i(\theta)$, which means that we can "undo" the action of $c_i$ through the selection of $g$, and thereby have $\bar c_i$ be anything at all. i.e., the assumption 2 can be met, and yet the map from $\theta$ to $\bar c$ can be anything.\\[5pt]
Because there's enough degrees of freedom in the selection of $g$ - we could just replace $c$ by $[g]^+$.\\[5pt]
All the properties listed in the paragraph above are also satisfied by $c = [g]^+$}
\com{Wenjie:Mathematically maybe true. But I think here $g$ has special physical meaning in different applications, like the tempratures, etc.. And $c_i$ is input from user, who may have a clear physical understanding how much to penalize different amount of violations. If we fix $c$ to be $[g]^+$, then we move the complexity to the design of $g$, which may not be straightforward for a user of our algorithm.}

To adapt the degree of risk-taking based on prior data obtained, we define a violation budget over a horizon of $T\in\mathbb N$ optimization iterations.
Our proposed VABO algorithm is designed to sequentially search over $T$ iterations $\{\theta_t\}_{t=1}^T$ while \rev{using a prescribed budget} of constraint violations in order to obtain a constraint-optimal set of parameters
\bee\label{eq:budgeted_optimization_problem}
\notag &\theta^\star_T \;:=\; \argmin_{\substack{t\in[T];\\ g(\theta_t)\leq 0}} \ell(\theta_t)\;\; \textrm{subject to:}\;\;
\sum_{t=1}^{T}\bar c_i(\theta_t)\leq B_i,
\eee
where $B_i$ denotes a budget allowed for the $i$-th violation cost. 

Note that this formulation is a generalization of  well-known constrained Bayesian optimization formulations proposed in the literature. For instance, if we set $B_i=0,\forall i\in[N]$, then our  formulation is reminiscent of safe Bayesian optimization~\cite{sui2015safe,sui2018stagewise}. Alternatively, setting $B_i\equiv \infty$ reduces our problem to constrained Bayesian optimization problem without violation cost consideration~\cite{gardner2014bayesian, gelbart2014bayesian}.

\section{Violation-Aware Bayesian Optimization (VABO)}
\subsection{\rev{Bayesian Optimization Preliminary}}
For Bayesian optimization, one models $\ell(x)$ and $g(x)$ as functions sampled from independent Gaussian processes. At iteration $t$, conditioned on previous function evaluation data $\mathcal{D} := \left\{\theta_{1:t}, \ell(\theta_{1:t})\right\}$, the posterior mean and standard deviation of $\ell$ is given by 
$
 \mu_\ell(\theta|\mathcal{D}) = k_\ell^\top(\theta, \theta_{\mathcal{D}}) K_\ell^{-1}\Delta y_\ell + \mu_{\ell,0}(\theta)$ and 
$\sigma^2_\ell(\theta|\mathcal{D})=k_\ell(\theta, \theta)-k_\ell^\top(x, \theta_{\mathcal{D}})K_\ell^{-1}k_\ell(\theta_{\mathcal{D}}, \theta)$,
where $\theta_{\mathcal{D}}=\theta_{1:t}$ is the set of control parameters with which previous experiments/simulations have been performed. Here,
\begin{align*}
&k_\ell(\theta,\theta_{\mathcal{D}}) \triangleq[k_\ell(\theta,\theta_i)]_{\theta_i\in \theta_{\mathcal{D}}},\;
k_\ell(\theta_{\mathcal{D}},\theta) \triangleq[k_\ell(\theta_i,\theta)]_{\theta_i\in \theta_{\mathcal{D}}},\\
& K_\ell \triangleq \left(k_\ell(\theta_{i},\theta_{j})\right)_{\theta_i,\theta_j\in \theta_\mathcal{D}},\;
\Delta y_\ell \triangleq[\ell(\theta_i)-\mu_{\ell,0}(\theta_i)]_{\theta_i\in \theta_\mathcal{D}},
\end{align*}
and $k_\ell(\cdot, \cdot)$ is a user-defined kernel function and $\mu_{\ell,0}$ is the prior mean function, both associated with $\ell$; see~\cite{frazier2018tutorial} for more details on kernel and prior mean selection. The above quantities are all assumed to be column vectors, except $K_\ell$, which is a positive-definite matrix.
For the constraint functions $g$, similar expressions for the \rev{posterior} mean $\mu_{g_i}(\theta|\mathcal{D})$ and \rev{standard deviation} $\sigma_{g_i}(\theta|\mathcal{D})$ can be obtained.

The kernelized functions above provide tractable approximations of the performance cost of the closed-loop system, along with the constraint functions, both of which were hitherto unmodeled/unknown. Classical BO methods operate by using the statistical information embedded within these approximations to intelligently explore the search space $\Theta$ via acquisition functions. A specific instance of an acquisition function commonly used in constrained BO is the constrained expected improved (CEI) function~\cite{gardner2014bayesian}, given by
\begin{equation}
\mathsf{CEI}(\theta|\mathcal{D})=\E\left(\prod_{i\in[N]}\mathbf{1}_{g_i(\theta)\leq 0}\; I(\theta)|\mathcal{D}\right).\label{equ:eic_def}
\end{equation}
where $\mathbf{1}$ denotes the indicator function, $\mathbb E$ denotes the expectation operator, and
$
I(\theta)=\max\{0, f(\theta^{\min}_t)-f(\theta)\}$
is the improvement of $\theta$ over the incumbent solution over $t$ iterations. Here, the incumbent \rev{best} solution is given by
\bee\label{eq:incumbent}
\theta_t^{\min}=\argmin_{\{\theta_\tau|\tau\in[t-1], g_i(\theta_\tau)\leq 0, \forall i\in[N]\}\, \cup\, \Theta_0} \ell(\theta),
\eee
with $\Theta_0$ being the initial safe set of feasible points.

As $g_i(\theta),\forall i\in[N]$ and $\ell(\theta)$ are independent, we deduce
\begin{equation}\notag
\mathsf{CEI}(\theta|\mathcal{D})=\prod_{i\in[N]} \Pr(g_i(\theta)\leq 0|\mathcal{D})\E\left(I(\theta)|\mathcal{D}\right).
\end{equation}
This acquisition function is therefore modulated by the feasibility probability
\bse
\bee\label{eq:acqfn_cei_a}
\Pr(g_i(\theta)\leq0|\mathcal{D})=\Phi\left(\tfrac{-\mu_{g_i}(\theta|\mathcal{D})}{\sigma_{g_i}(\theta|\mathcal{D})}\right),
\eee
where $\Phi(\cdot)$ is the cumulative distribution function of standard normal distribution. The closed-form expression of unconstrained expected improvement is derived in~\cite{jones1998efficient},
\bee\label{eq:acqfn_cei_b}
\E\left(I(\theta)|\mathcal{D}\right)=\left(\ell(\theta_t^{\min})-\mu_{\ell}(\theta|\mathcal{D})\right) \Phi\left(z\right)+\sigma_\ell(\theta|\mathcal{D})\phi\left(z\right),
\eee
\ese
where $\phi(\cdot)$ is the standard normal probability density function and 
$z=\tfrac{\ell(\theta_t^{\min})-\mu_\ell(\theta|\mathcal{D})}{\sigma_\ell(\theta|\mathcal{D})}$. Combining~\eqref{eq:acqfn_cei_a} and~\eqref{eq:acqfn_cei_b} yields a tractable expression for computing \rev{$\mathsf{CEI}$ in}~\eqref{eqn:acquisition_problem}.

\subsection{The VABO Algorithm}

Our VABO algorithm proposes an auxiliary optimization problem that leverages the constrained expected improvement acquisition function to select feasible good candidates while ensuring (with high probability) that the violation cost of evaluating this candidate will remain within a prescribed budget.

Let the remaining budget at the $t$-th VABO iteration be 
$B_{i,t} \triangleq B_i-\sum_{\tau=1}^{t-1}\bar c_i(\theta_{\tau})$.
At this iteration, we solve the following \textit{auxiliary} problem \bse\label{eqn:acquisition_problem}
\begin{align}
&\theta^\star_t := \argmax\limits_{\theta\in\Theta}\;\mathsf{CEI}(\theta|\mathcal{D}),\label{eqn:acq_obj} \\
\text{subject to:}\qquad & \prod_{i\in[N]}\Pr(\bar c_i(\theta)\leq \beta_{i,t}B_{i,t})\geq 1-\epsilon_t, \label{eqn:acq_constraint}
\end{align}
\ese
to compute the next control parameter candidate $\theta_t^\star$, where $\beta_{i,t}\in[0,1]$ is a user-defined weighting scalar that determines how much of the remaining budget can be used up, and $0<\epsilon_t\ll 1$
determines the probability of large constraint violation. Note that~\eqref{eqn:acq_obj} involves maximizing the constrained expected improvement objective, which is common to cBO algorithms; c.f.~\cite{gardner2014bayesian}. Our modification using the budget, as written in~\eqref{eqn:acq_constraint}, enforces that the next sampled point will not use up more than a $\beta_{i,t}$ fraction of the remaining violation cost budget $B_{i,t}$ for all constraints with a probability of at least $1-\epsilon_t$, conditioned on the data seen so far. \rev{This modification allows us to trade a prescribed level of violation risk for more aggresive exploration, leading to faster convergence.} Although we are using \rev{the commonly used} constrained expected improvement acquisition function here, we can easily generalize to other acquisition functions by simply replacing $\mathsf{CEI}$ in the objective~\eqref{eqn:acq_obj}.

We now discuss how to efficiently solve the auxiliary problem~\eqref{eqn:acquisition_problem}. Recall from Assumption~\ref{assump:cost_func} that $c_i$ is non-decreasing on $\mathbb{R}_{\ge 0}$ for every $i\in[N]$. Therefore, we can define an inverse violation function
$c_i^{-1}(s) = \sup\{r\in\mathbb{R}_{\ge 0} \mid c_i(r)\leq s\}$
for any $s\in\mathbb{R}_{\ge 0}$.
Therefore, we can write
\begin{align*}
\Pr(\bar c_i(\theta)\leq\beta_{i,t}B_{i,t}|\mathcal{D})&=\Pr([g_i(\theta)]^+\leq c_i^{-1}(\beta_{i,t}B_{i,t})|\mathcal{D})\\
&=\Pr(g_i(\theta)\leq c_i^{-1}(\beta_{i,t}B_{i,t})|\mathcal{D}).
\end{align*}
Since $g_i(\theta)$ follows a Gaussian distribution with mean $\mu_{g_i}(\theta|\mathcal{D})$ and variance $\sigma_{g_i}^2(\theta|\mathcal{D})$, we get
\[
\Pr(\bar c_i(\theta)\leq\beta_{i,t}B_{i,t}|\mathcal{D})=\Phi\left(\frac{c_i^{-1}(\beta_{i,t}B_{i,t})-\mu_{g_i}(\theta|\mathcal{D})}{\sigma_{g_i}(\theta|\mathcal{D})}\right).
\]
When the number of control parameters $n_\theta$ 
is small ($<6$), we can place a grid on $\Theta$ and evaluate the cost and constraints of~\eqref{eqn:acquisition_problem} at all the grid nodes. The maximum feasible solution can then be used as the solution to the auxiliary problem. When $n_\theta>6$, we can use gradient-based methods with multiple starting points to solve problem~\eqref{eqn:acquisition_problem}, since evaluating the learned GPs approximating $\ell$ and $g$ require very little computational time or effort when $T$ 
is not large (empirically, $<2000$).
The VABO algorithm is terminated either when $T$ iterations of the algorithm have been reached, or the cumulative violation cost for constraint $i$ exceeds $B_i$ for any $i\in [N]$. 
We provide pseudocode for implementation in Algorithm~\ref{alg:VABO}.

The following proposition provides a probabilistic guarantee of using up the given number of samples while keeping the violation cost below the given budget. It highlights the ``violation-awareness'' property exhibited by VABO.
\begin{proposition}\label{thm:vio_aware}
Fix $\delta\in (0,1)$ and  $T\in\mathbb N$. For any $\beta_{i,t}\in (0,1]$, if $\epsilon_t$ are chosen for every $t\in [T]$ such that $\delta = 1-\prod_{t=1}^T (1-\epsilon_t)$, then the VABO algorithm described in Algorithm~\ref{alg:VABO} satisfies the inequality
\[
\Pr\left(\left\{T\text{ iters are used}\right\}\cap\left\{\sum_{t=1}^{T}\bar c_i(\theta_t)\leq B_i,\forall i\in[N]\right\}\right)\geq 1-\delta.
\]
\end{proposition}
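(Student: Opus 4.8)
The plan is to reduce the probabilistic statement to a deterministic implication plus a multiplicative chaining of one-step guarantees. First I would introduce, for each $t\in[T]$, the event $A_t$ that the algorithm reaches iteration $t$ without having triggered early termination, so that $A_1$ is the whole sample space and $A_{t+1}=A_t\cap\{\sum_{\tau=1}^{t}\bar c_i(\theta_\tau)\le B_i,\ \forall i\in[N]\}$; since $\sum_{\tau\le t}\bar c_i(\theta_\tau)$ is nondecreasing in $t$ (because $c_i\ge 0$ by Assumption~\ref{assump:cost_func}), the event in the statement is precisely $A_{T+1}$. I would also define the per-iteration ``success'' event $\mathcal{E}_t := A_t\cap\{\bar c_i(\theta_t)\le \beta_{i,t}B_{i,t},\ \forall i\in[N]\}$, which is measurable with respect to the history $\mathcal{F}_t$ generated by $\{\theta_{1:t},\ell(\theta_{1:t}),g(\theta_{1:t})\}$.

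The deterministic core is the inclusion $\bigcap_{t=1}^{T}\mathcal{E}_t\subseteq A_{T+1}$, which I would prove by induction: on $A_t$ one has $B_{i,t}=B_i-\sum_{\tau<t}\bar c_i(\theta_\tau)\ge 0$, hence on $A_t\cap\mathcal{E}_t$, using $\beta_{i,t}\in(0,1]$, $\bar c_i(\theta_t)\le\beta_{i,t}B_{i,t}\le B_{i,t}$, so $\sum_{\tau\le t}\bar c_i(\theta_\tau)\le B_i$ for every $i$, i.e.\ $A_{t+1}$ holds; chaining this from $t=1$ to $T$ gives the inclusion. This step uses only $B_i\ge 0$, $\beta_{i,t}\le 1$, and the monotonicity/nonnegativity of $c_i$ from Assumption~\ref{assump:cost_func} (which also justifies the inverse-cost reformulation used to evaluate~\eqref{eqn:acq_constraint}).

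It then remains to lower-bound $\Pr(\bigcap_{t=1}^{T}\mathcal{E}_t)$. Conditioning on $\mathcal{F}_{t-1}$, the remaining budgets $B_{i,t}$ and the selected candidate $\theta_t=\theta_t^\star$ are $\mathcal{F}_{t-1}$-measurable, and because the constraint~\eqref{eqn:acq_constraint} is enforced at $\theta_t^\star$ while the GPs modeling $g_1,\dots,g_N$ are mutually independent,
\[
\Pr\big(\bar c_i(\theta_t^\star)\le\beta_{i,t}B_{i,t}\ \forall i \,\big|\, \mathcal{F}_{t-1}\big)=\prod_{i\in[N]}\Pr\big(\bar c_i(\theta_t^\star)\le\beta_{i,t}B_{i,t}\,\big|\,\mathcal{F}_{t-1}\big)\ \ge\ 1-\epsilon_t
\]
on $A_t$. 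Since $\bigcap_{\tau<t}\mathcal{E}_\tau\subseteq A_t$ by the previous paragraph, the tower property yields $\Pr(\bigcap_{\tau\le t}\mathcal{E}_\tau)\ge(1-\epsilon_t)\Pr(\bigcap_{\tau<t}\mathcal{E}_\tau)$; iterating from $t=T$ down to $t=1$ gives $\Pr(\bigcap_{t=1}^{T}\mathcal{E}_t)\ge\prod_{t=1}^{T}(1-\epsilon_t)=1-\delta$, and combining with the deterministic inclusion gives $\Pr(A_{T+1})\ge 1-\delta$.

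I expect the main obstacle to be the measure-theoretic bookkeeping around early stopping and the random budgets: one must ensure $\theta_t$, $B_{i,t}$, and $\mathcal{E}_t$ are well-defined and correctly measurable on the conditioning event, and --- crucially --- that the product of marginal feasibility probabilities appearing in~\eqref{eqn:acq_constraint} really is the joint probability $\Pr(\mathcal{E}_t\mid\mathcal{F}_{t-1})$, which is exactly where independence of the constraint GPs enters. A minor accompanying point is that the auxiliary problem~\eqref{eqn:acquisition_problem} must be feasible at every reached iteration for $\theta_t^\star$ to exist; this can be obtained from Assumption~\ref{asmp:initial_set} (sampled points of $\Theta_0$ have high posterior feasibility) or simply posited.
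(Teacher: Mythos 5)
Your proposal is correct and follows essentially the same route as the paper's proof: a recursive conditioning argument that chains the per-iteration guarantee from constraint~\eqref{eqn:acq_constraint}, uses $\beta_{i,t}\le 1$ to pass from $\beta_{i,t}B_{i,t}$ to $B_{i,t}$, and telescopes the factors $(1-\epsilon_t)$ to obtain $1-\delta$. The only difference is presentational: you make explicit the filtration/tower-property bookkeeping, the deterministic inclusion handling early stopping, and the independence of the constraint GPs needed to equate the product of marginals with the joint probability, all of which the paper's shorter proof leaves implicit by conditioning directly on $\mathcal{E}_{T-1}$.
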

\begin{proof}
\small Let $$\mathcal{E}_t:=\{t\text{ iters are used}\}\cap\left\{\sum_{\tau=1}^{t}\bar c_i(\theta_\tau)\leq B_i,\;\forall i\in[N]\right\},$$
where $t\in[T]$.
We have 
\begin{align}
\Pr\left(\mathcal{E}_T\right)&=\Pr\left(\mathcal{E}_{T-1}\right)\Pr\left(\mathcal{E}_{T}\mid \mathcal{E}_{T-1}\right)\nonumber\\
&=\Pr\left(\mathcal{E}_{T-1}\right)\Pr\left(\bar c_i(\theta_T))\leq B_{i,T},\forall i\in[N]\mid\mathcal{E}_{T-1}\right)\nonumber\\
&\geq\Pr\left(\mathcal{E}_{T-1}\right)\Pr\left(\bar c_i(\theta)\leq\beta_{i,T}B_{i,T},\forall i\in[N]\mid\mathcal{E}_{T-1}\right)\nonumber\\
&\geq \Pr\left(\mathcal{E}_{T-1}\right)(1-\epsilon_T).\nonumber
\end{align}

By recursion, we have
$\Pr\left(\mathcal{E}_T\right)
\geq\Pr\left(\mathcal{E}_1\right)\prod_{t=2}^{T}(1-\epsilon_t)\geq\prod_{t=1}^{T}(1-\epsilon_t)\nonumber=1-\delta$, which concludes the proof.
\end{proof}

\begin{algorithm}[htbp!]
	\caption{Violation-Aware Bayesian Optimization}\label{alg:VABO}
	\begin{algorithmic}[1]
	\normalsize
	\State \textbf{Require}: Max VABO iterations $T$, violation budget $B_i,\forall i\in[N]$ and an initial safe set of points $\Theta_0$ 
	\State Evaluate $\ell(\theta)$, $g_i(\theta),\forall i\in[N]$ for $\theta\in \Theta_0$ by performing experiments or simulation
	\State Calculate incumbent solution using~\eqref{eq:incumbent} \State Initialize dataset $\mathcal D=\{\theta_t, \ell(\theta_t), g(\theta_t)\}\; \forall\theta_t\in\Theta_0$
	\For{$t\in[T]$}
	    \State $\Theta_t=\left\{\prod_{i\in[N]}\Pr(\bar c_i(\theta)\leq \beta_{i,t}B_{i,t}|\mathcal{D})\geq 1-\epsilon_t|\theta\in\Theta\right\}$
    \State $\theta_{t}^\star=\argmax_{\theta\in \Theta_t}\mathsf{CEI}(\theta|\mathcal{D})$ \Comment{Solving~\eqref{eqn:acquisition_problem}}
	    \State $\ell(\theta_t^\star), g(\theta_t^\star)\leftarrow$  perform experiment with $\theta_t^\star$
	    \State $B_{i,t+1}=B_{i,t}-\bar c_i(\theta_t),\forall i\in[N]$
	    \If{$\exists\, i\in[N]$, such that $B_{i,t+1}<0$}
	    \State \textbf{return} $\theta_{t+1}^{\min}$
	    \EndIf
	    \State Update dataset, $\mathcal D\leftarrow \mathcal D \cup \{\theta_t^\star, 
	    \ell(\theta_t^\star), g(\theta_t^\star)\}$ 
	    \State Update Gaussian process posterior 
	\EndFor
	\State \textbf{return} VABO solution: $\theta_{T+1}^{\min}$
	\end{algorithmic}
\end{algorithm}

\section{Case Study: VCS Optimization with  Temperature Constraints} 

In this section, we consider the problem of safely tuning set-points of a vapor compression system~(VCS). As shown in Fig.~\ref{fig:vcs_vabo_diagram}, a VCS typically consists of a compressor, a condenser, an  expansion valve, and an evaporator. 
While physics-based models of these systems can be formulated as large sets of nonlinear differential algebraic equations to predict electrical power consumption, there are a variety of challenges in developing and calibrating these models. This motivates interest in directly using measurements of the VCS power under different operating conditions, and assigning optimal set-points to the VCS actuators using data-driven, black-box optimization methods such as BO, which minimize the power consumption. 

\begin{figure}[!ht]
    \centering
    \includegraphics[width=0.6\columnwidth]{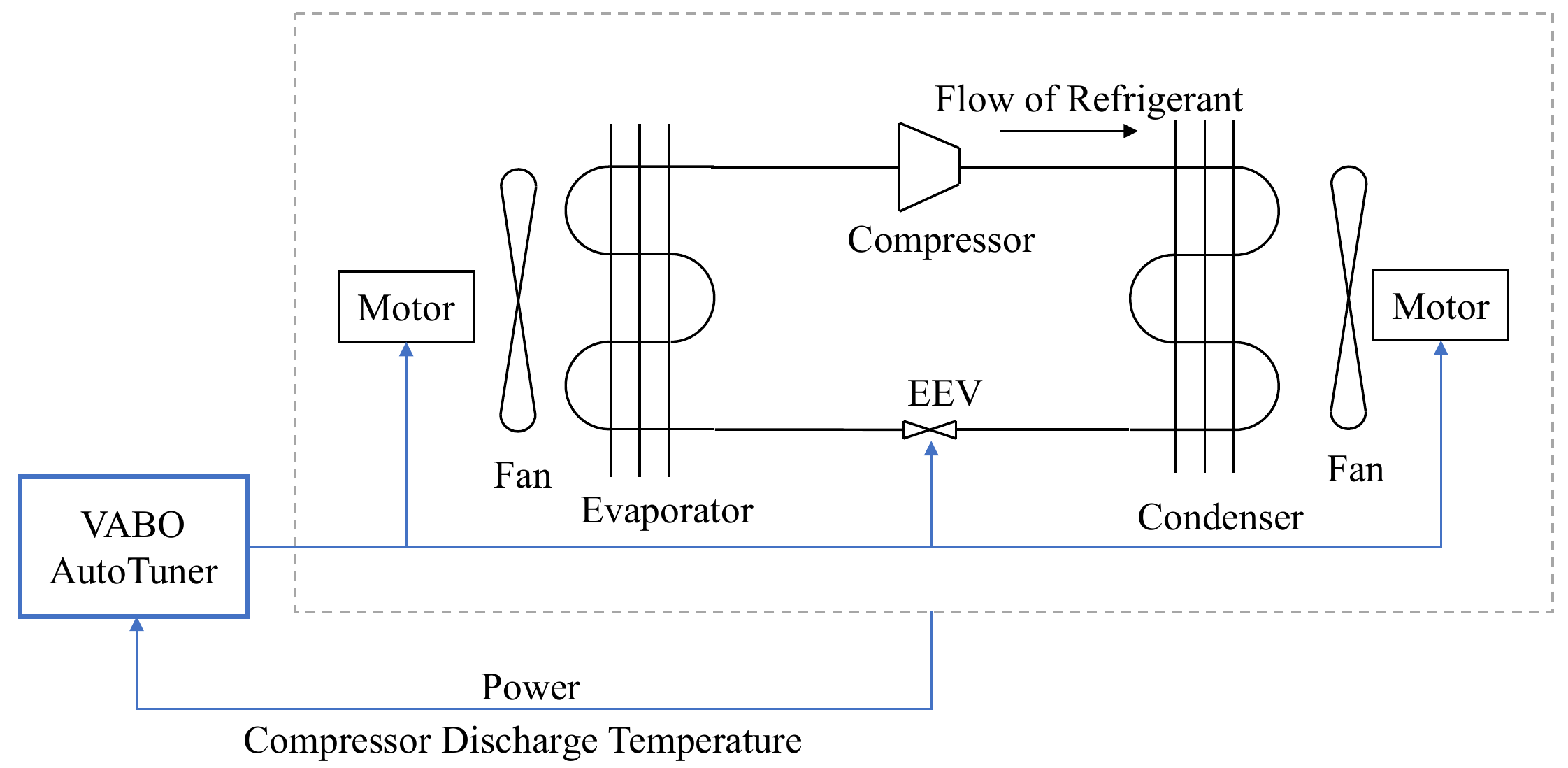}
    \caption{Schematic diagram of vapor compression system with our proposed VABO controlling the EEV~(electronic expansion valve) and the two fans' speed. For simplicity, we do not show other measurements and controls.}
    \label{fig:vcs_vabo_diagram}
\end{figure}

Reckless \rev{alterations} of the set-points can drive the system into operating modes that reduce the reliability or lifespan of the system. To avoid these deleterious effects, one can add several constraints during the tuning process. One such constraint, considered here, is on the temperature of the refrigerant leaving the compressor, also referred to as the `discharge temperature'. 
The discharge temperature must be managed because compressors are designed to operate within specific temperature ranges; excessively high temperatures can result in the breakdown of refrigerant oils and increase wear and tear.  In addition, high temperatures are often correlated with high pressures, which can cause metal fatigue and compromise the integrity of the pressurized refrigerant pipes in extreme cases. 
While managing the constraints mentioned above in the long run is critical, we also observe that small violations over short periods of time have limited harmful effects. Indeed, it may be beneficial to take the risk of short-period limited violation to accelerate the tuning process.

%

We cast the VCS optimization problem in the same form as ~\eqref{eqn:formulation}, with $\ell$ denoting the steady-state power of the VCS with set points $\theta$. The constraint $g\le 0$ is given by $T_d(\theta)-\hat T_d\leq0$, where \rev{$T_d(\theta)$ is the steady-state discharge temperature with set points $\theta$ and} $\hat T_d$ is a safe upper bound. We close a feedback loop from compressor frequency to room temperature, leaving the set of 3 tunable set points $\theta$ as the expansion valve position and the indoor/outdoor fan speeds. The effect of these set points on power and discharge temperature are not easy to model, and no simple closed-form representation exists. In practice, we assign a setpoint $\theta$, wait for an adequate amount of time until the power signal resides within a 95\% settling zone, and use that power value as $\ell(\theta)$ and the corresponding discharge temperature as $T_d(\theta)$.


\subsubsection*{Implementation Details}
For this paper, we use a high-fidelity model of the dynamics of a prototype VCS\footnote{Note that while the behavior of this model have been validated against \rev{a} real VCS, the numerical values and/or performance presented in this work is not representative of any product.} written in the \texttt{Modelica} language~\cite{modelica2017a} to collect data and optimize the set-points on-the-fly. A complete description of the model is available in~\cite{chakrabarty2021_VCS}.
The model was first developed in the \texttt{Dymola}~\cite{dymola2020a} environment, and then exported as a functional mockup unit (FMU)~\cite{fmi2019a}, and its current version comprises 12,114 differential equations. Bayesian optimization is implemented in \texttt{GPy}~\cite{gpy2014}.

We define our set-point search space \rev{$\Theta := [200, 300]\times[300, 400]\times[500, 800]$}, in expansion valve counts, indoor fan rpm, and outdoor fan rpm, respectively. We aim to keep the discharge temperature below $\hat T_d=331$~K; these constraints are set according to domain knowledge~\cite{burns2018proportional}. We initialize the \rev{simulator} at an expansion valve position of 280~counts, an indoor fan speed of 380~rpm, and an outdoor fan speed of 700~rpm, which is known to be a feasible set-point based on experience.
%
Constraint violations are penalized with the function $c_i(s)=s^2,\, s\in\mathbb{R}^+$. The quadratic nature of the violation cost implies that minor violations are not as heavily penalized as larger ones. The reason for this is that small violations over a small period of time are unlikely to prove deleterious to the long-term health of the VCS, whereas large violations could have more significant effects, even over short periods of time; for instance, damage to motor winding insulation or exceeding mechanical limits on the pressure vessel of the compressor. These constraints have been incorporated into the selection of the thresholds $\hat T_d$. Of course, the threshold values and a parameterized violation cost could be considered to be hyperparameters, and hence could be optimized via further experimentation.
We choose the RBF kernel for our problem, which is commonly used in Bayesian optimization~\cite{frazier2018tutorial}, and compare our method to safe BO~\cite{berkenkamp2016safe} and generic cBO~\cite{gelbart2014bayesian}. 

We showcase the effect of varying $B$ by selecting $B=0$, $10$, and $20$, and set $\beta_{t}=\max\{\beta_0, \frac{1}{T-t+1}\}$. This choice allows the algorithm to use an increasing fraction of the budget when approaching the sampling limit $T$. 
A violation of the cost budget of $10$ works well in this case study, and additional violations of the cost budget brings little convergence benefit. We also observe that the impact of $\beta_0$ is insignificant for this example, and therefore we set $\beta_0=1.0$.  We use ``VABO $B$'' to indicate violation aware BO with budget $B$.

\subsubsection*{Results and Discussion}
\begin{figure*}[t]
    \centering
    \includegraphics[width=\columnwidth]{./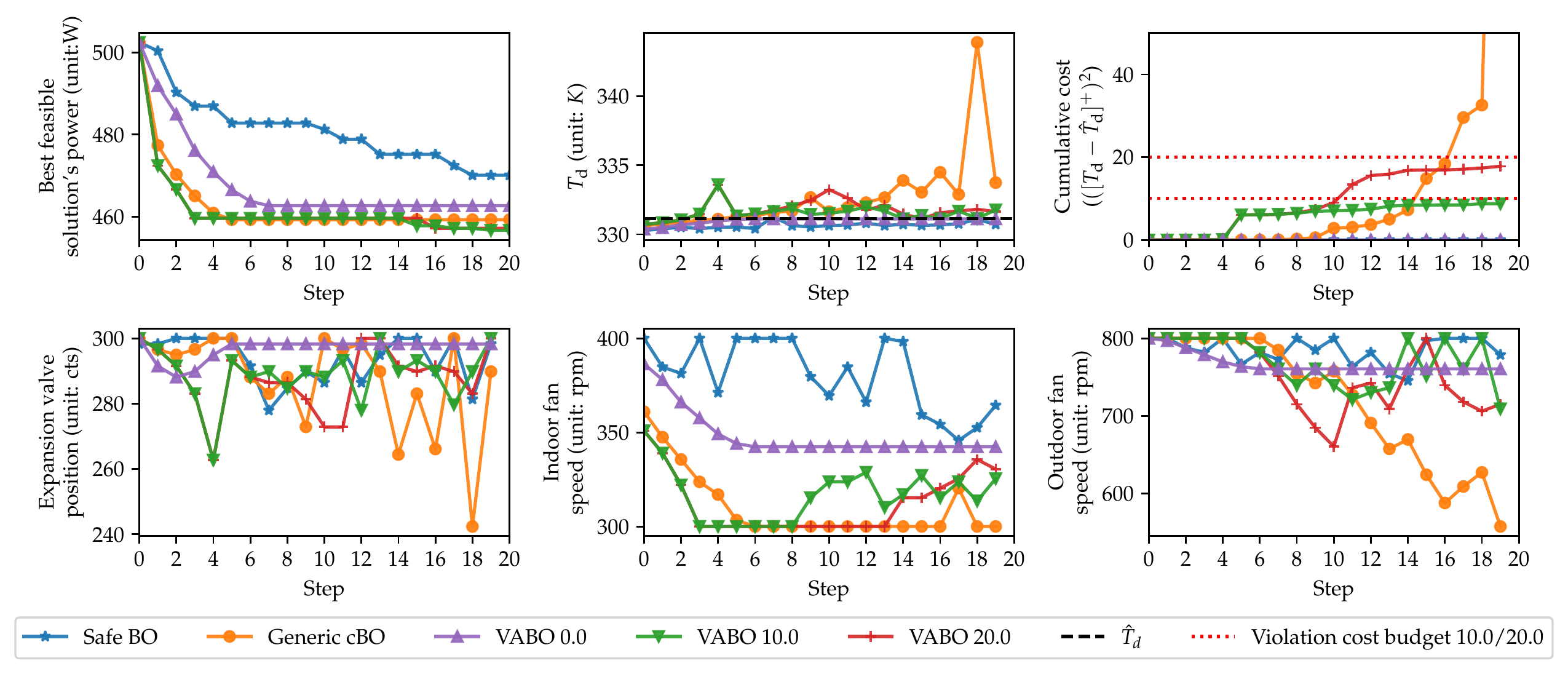}
    \vspace{-2.8ex}
    \caption{Best feasible solution's power, discharge temperature, cumulative violation cost and the three set-points' evolution. }
    \label{fig:sub_plots}
\end{figure*}
Figure~\ref{fig:sub_plots} illustrates the experimental results. We observe that the best feasible solution's power of our {VABO with $B=10$} decreases {slightly faster than} generic constrained BO and significantly faster than safe BO. At the same time, our method manages the violation cost well under the violation cost budget. In comparison, generic constrained BO incurs very large costs in the last few steps, far exceeding the violation cost budget $10$ by more than an order of magnitude.  The improvements from VABO are due to its recognition that the discharge temperature is sensitive to the expansion valve position, but that this position needs to be adjusted in coordination with the fan speeds.  cBO causes large discharge temperatures after step 12 because it makes large adjustments to the expansion valve position while maintaining the indoor fan speed at a low value, a combination that is not penalized during exploration.  VABO with $B=10$ reduces the power by about $9\%$ compared to the most power-efficient initial safe set-points after $20$ steps. We also observe that the $T_d$ constraint is often violated during exploration, and that large violations are entirely possible without violation awareness, as demonstrated by generic cBO. 
Even without any violation cost budget~($B_i=0$), our violation-aware BO finds better solutions faster than safe BO. One explanation for this is that safe BO tends to waste a lot of evaluations to enlarge the safe set, which may lead to slow convergence. Conversely, VABO implicitly encodes the safe set exploration into the acquisition optimization problem~\eqref{eqn:acquisition_problem} and only enlarges the safe set when necessary for optimization, while keeping the violation risk under control. 

\section{Conclusions}
In this paper, we design a sample-efficient and violation-aware Bayesian optimization~(VABO) algorithm to solve the closed-loop control performance optimization problem with unmodeled constraints, by leveraging the fact that small violations over a short period only incur limited costs during the optimization process in many applications such as for vapor-compression cycles. We strategically trade the budgeted violations for faster convergence by solving a tractable auxiliary problem with probabilistic budget constraints at each step. Our experiments on a VCS show that, as compared to existing safe BO and generic constrained BO, our method simultaneously exhibits accelerated \rev{global} convergence and manages the violation cost.
%

\setstretch{1.05}
\bibliographystyle{IEEEtran}
\bibliography{IEEEabrv,root}
\vspace{.5cm}

\end{document}